\author{
}
\theoremstyle{plain}
\newtheorem{theorem}{Theorem}
\newtheorem{proposition}{Proposition}
\theoremstyle{definition}
\newtheorem{definition}{Definition}
\newcommand{\expectation}{\mathbb{E}}
\newcommand{\proba}{\mathbb{P}}
\newcommand{\regret}{{\cal R}}
\newcommand{\indicator}[1]{\mathbbm{1}_{\left[#1\right]}}
\newcommand{\bfZ}{{\bf Z}}
\newcommand{\realset}{\mathbb{R}}
\newcommand{\Zspace}{\mathbb{Z}}
\newcommand{\Nspace}{\mathbb{N}}
\DeclareMathOperator*{\argmax}{argmax}
\newcommand{\Xseq}{\underline{X}}
\newcommand{\xseq}{\underline{x}}
\newcommand{\liva}[1]{\textcolor{blue}{#1}}
\newcommand{\mand}{\;\wedge\;}
\newcommand{\mor}{\;\vee\;}
\newcommand{\event}{{\cal E}}
\newcommand{\bpf} {\noindent{\sc Proof} : }
\newcommand{\epf} {\hfill$\square$\vspace{.5cm}}
\newcommand{\E} {\mathbb{E}}
\renewcommand{\P} {\mathbb{P}}
\newcommand{\N}{\mathbb{N}}
\newcommand{\R}{\mathbb{R}}
\newcommand{\m}{\mathcal{M}}
\newcommand{\bfbeta}{\boldsymbol{\beta}}
\newcommand{\Uspace}{\mathcal{U}}
\newenvironment{equationsize*}[1]{%
  \skip@=\baselineskip 
  #1%
  \baselineskip=\skip@ 
  \equation
}{\nonumber\endequation \ignorespacesafterend} 
\newenvironment{alignsize*}[1]{%
  \skip@=\baselineskip 
  #1%
  \baselineskip=\skip@ 
  \start@align\@ne\st@rredtrue\m@ne
}{\endalign\ignorespacesafterend} 
\newcommand{\captionfonts}{\small}
\long\def\@makecaption#1#2{%
  \vskip\abovecaptionskip
  \sbox\@tempboxa{{\captionfonts #1: #2}}%
  \ifdim \wd\@tempboxa >\hsize
    {\captionfonts #1: #2\par}
  \else
    \hbox to\hsize{\hfil\box\@tempboxa\hfil}%
  \fi
  \vskip\belowcaptionskip}
\title{
Stationary Mixing Bandits
}
\author[1]{Julien Audiffren}
\author[2]{Liva Ralaivola}
\affil[1]{CMLA, ENS Cachan,Cachan,France}
\affil[2]{QARMA, Aix-Marseille Université, CNRS, LIF, Marseille, France}
\date{}
\begin{document}
\maketitle

\begin{abstract}
We study the bandit problem where arms are associated with stationary $\varphi$-mixing processes and where rewards are therefore dependent: the question that arises from this setting is that of recovering some independence by ignoring the value of some rewards. As we shall see, the bandit problem we tackle  requires us to address the exploration/exploitation/independence trade-off. To do so, we provide a UCB strategy together with a general regret analysis for the case where the size of the independence blocks (the ignored rewards) is fixed and we go a step beyond by providing an algorithm that is able to compute the size of the independence blocks from the data. Finally, we give an analysis of our bandit problem in the restless case, i.e., in the situation where the time counters for all mixing processes simultaneously evolve.  


\end{abstract}


\section{Introduction}
\label{sec:introduction}

\paragraph{Bandit with mixing arms.} The bandit problem consists in an agent who has to choose at each step between $K$ arms. A stochastic process is associated to each arm, and pulling an arm  produces a reward which is the realization of the corresponding stochastic process. The objective of the agent is to maximize its long term reward. 
It is classically assumed that the stochastic process associated to each arm is a sequence of independently and identically distributed (i.i.d) random variables (see, e.g.~\cite{LaiR85AM}). In that case, the challenge the agent has to face is the well-known exploration/exploitation problem: she has to simultaneously make sure that she collects information from all arms to try to identify the most rewarding ones ---this is \textit{exploration}--- and to maximize the rewards along the sequence of pulls she performs ---this is \textit{exploitation}.
Many algorithms have been proposed to solve this trade-off between exploration and exploitation~\cite{AuerCBF02MLJ,AuerO10MH,bubeckCB12Regret,LaiR85AM}. We propose to go a step further than the i.i.d setting and to work in the situation where the process associated with each arm is a stationary $\varphi$-mixing process and the rewards are thus   dependent from one another, but with a strength of dependence that weakens over time. From an application point of view, this is a reasonable dependence structure: if a user clicks on some ad (a typical use of bandit algorithms) at some point in time, it is very unlikely that she will click again on this ad in the near future.
As it shall appear in the sequel, working with such dependent observations poses the question of how informative are some of the rewards with respect to the value of an arm since, because of the dependencies and the high correlation between close-by (in time) rewards, they might not reflect the true 'value' of the arms. However, as the dependencies weaken over time, some kind of independence might be recovered if rewards are ignored. This actually requires to deal with a new trade-off exploration/exploitation/independence
that need be precisely handled.

\paragraph{Rested and Restless case.} A closely related setup that addresses the bandit problem with dependent rewards is when they are distributed according to Markov processes, such as Markov chains and Markov decision process (MDP) \cite{Ortner:14maba,TekinL12}, where the dependences between rewards are of bounded range, which is what distinguishes those works with ours.
Contributions in this area study two settings, that we will analyze as well: the \textit{rested} case, where the process attached to an arm evolves only when the arm is pulled, and the {\em restless} case, where all processes simultaneously evolve at each time step. 





\paragraph{Contributions and structure of the paper.}
We define the notion of a $\varphi$-mixing bandit and its regret (Section~\ref{sec:background}), we provide a general analysis and an algorithm to solve the rested case where the size of independence blocks is fixed (Section~\ref{sec:easybandits}). We provide another approach where these sizes are computed from the data by introducing another algorithm (Section~\ref{sec:vectorbandit}). Finally, in Section~\ref{sec:restless}, we provide an algorithm and a regret analysis to deal with the restless case.


\section{Overview of the problem}
\label{sec:background}

Let $(\omega, \mathcal{F}, \P)$ be a probability space.
We recall the definitions of stationary and of $\varphi$-mixing processes: 
\begin{definition}[Stationarity]
A sequence of random variables $\underline{X}=\{X_t\}_{t=-\infty}^{+\infty}$ is {\em stationary}
if, for any $t$ and nonnegative integer $m$ and $s$, the random subsequences $(X_t,\ldots,X_{t+m})$ and $(X_{t+s},\ldots,X_{t+m+s})$
are identically distributed.
\end{definition}

\begin{definition}[$\varphi$-mixing process] Let $\underline{X}=\{X_t\}_{t=-\infty}^{+\infty}$ be a stationary
sequence of random variables. For any $i,j\in\mathbb{Z}\cup\{-\infty,+\infty\}$, let $\sigma_i^j$ denote
the $\sigma$-algebra generated by the random variables $X_t$, $i\leq t\leq j$. Then, for any positive
integer $n$, the $\varphi$-mixing coefficient $\varphi(n)$ of the stochastic process $\underline{X}$ is defined as
\begin{equation}
\label{eq:phicoefficient}
\varphi(n)=\sup_{t,A\in\sigma_{t+k}^{+\infty},B\in\sigma_{-\infty}^t}\left|\proba\left[A|B\right]-\proba\left[A\right]\right|.
\end{equation} 
$\underline{X}$ is said to be $\varphi$-mixing if $\varphi(n)\rightarrow 0$ as $n\rightarrow \infty$.
\end{definition}

We are interested in the problem of sampling from a $K$-armed $\varphi$-mixing bandit.
In our setting, pulling arm $k$ at time $t$ provides the agent with a realization of
the random variable $X^k_{\tau_k(t)}$, where $\tau_k(t)=t$ in the restless case and $\tau_k(t)$ is the number of times arm $k$ was pulled in the rested case, and where the family $(X_{t}^k)_{t\geq 1}$ satisfies the following hypotheses : 
\begin{enumerate}
\item $\forall k,\; (X_{t}^k)_{t\in\Zspace}$ is stationary;
\item the sequences $(X_{t}^k)_{t\in\Zspace}$ are
$\varphi$-mixing;
\item each $X^k_1$ takes values in a discrete finite set.
\end{enumerate}

This setting assumes the possibility of long-term
dependencies between the rewards output by the arms. It is important to note that by definition of the $\varphi$-mixing processes, the amount of dependence decreases with time. Hence, as evoked earlier, in order to choose which arm to pull, the agent is forced to
address the exploration/exploitation/independence trade-off 
where {\em independence} may be partially recovered by ignoring some rewards so as to make computations on data that are distant in time, i.e. data that are not too correlated (thanks to the mixing property).



It is critical to note that unlike in the i.i.d. framework, Hoeffding inequality cannot be applied in this setting, thus the widely used upper confidence bound (UCB) algorithms cannot be used here. In the case of stationary $\varphi$-mixing distributions, we have the following theorem from \cite{kontorovich08concentration}. 
\begin{theorem}[\cite{kontorovich08concentration,MohriR10jmlr}]
\label{th:kontorovitch}
Let $\psi:\mathcal{U}^m\rightarrow\realset$ be a function defined over a countable space $\mathcal{U}$, and $\underline{X}$ be a stationary $\varphi$ mixing process. 
If $\psi$ is $l$-Lipschitz with respect to the Hamming metric for some $l>0$, then the following holds
for all $t>0$:
\begin{equation}
\label{eq:kontorovitch}
\proba_{\underline{X}}\left[\left|\psi(\underline{X})-\expectation{\psi(\underline{X})}\right|>t\right]\leq 2\exp\left[-\frac{t^2}{2ml^2\|\Lambda_m\|_\infty^2}\right],\end{equation}
where $\|\Lambda_m\|_\infty\leq 1+2\sum_{k=1}^m\varphi(k)$.
\end{theorem}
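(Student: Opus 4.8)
The plan is to establish the bound through the Doob (martingale) decomposition of $\psi(\underline{X})-\expectation[\psi(\underline{X})]$, to control the conditional oscillation of each martingale increment in terms of the $\varphi$-mixing coefficients, and then to invoke the Azuma--Hoeffding inequality. Concretely, I would fix the coordinates $\underline{X}=(X_1,\ldots,X_m)$, introduce the filtration $\mathcal{F}_i=\sigma(X_1,\ldots,X_i)$, and form the martingale $M_i=\expectation[\psi(\underline{X})\mid\mathcal{F}_i]$, so that $M_0=\expectation[\psi(\underline{X})]$ and $M_m=\psi(\underline{X})$. Writing $\psi(\underline{X})-\expectation[\psi(\underline{X})]=\sum_{i=1}^m D_i$ with $D_i=M_i-M_{i-1}$, the problem reduces to controlling the sum of a bounded martingale difference sequence.

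The core step---and the part I expect to be the main obstacle---is to bound each increment $D_i$ uniformly by a quantity governed by the mixing coefficients. In the i.i.d.\ case this is the textbook bounded-difference (McDiarmid) estimate; here, however, altering the value of the single coordinate $X_i$ perturbs the conditional law of the entire future block $(X_{i+1},\ldots,X_m)$, and this perturbation must be quantified. To do so I would introduce the upper-triangular mixing matrix $\Lambda_m$ whose $(i,j)$ entry measures, for $i<j$, the total-variation sensitivity of the conditional distribution of $X_j$ given the past up to time $i$ to the value taken at time $i$, and then show that these entries are dominated by the coefficients $\varphi(j-i)$ using definition~\eqref{eq:phicoefficient}. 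Combining the $l$-Lipschitz property of $\psi$ with these total-variation estimates through a coupling argument then yields a bound on $|D_i|$ by $l$ times the $i$-th row sum of $\Lambda_m$; since every row sum is at most $\|\Lambda_m\|_\infty\leq 1+2\sum_{k=1}^m\varphi(k)$, one obtains the uniform increment bound $|D_i|\leq l\,\|\Lambda_m\|_\infty$.

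Finally, with this uniform bound on the martingale differences in hand, I would apply the Azuma--Hoeffding inequality to the martingale $(M_i)_{i=0}^m$, giving
\begin{equation*}
\proba_{\underline{X}}\left[\left|\psi(\underline{X})-\expectation[\psi(\underline{X})]\right|>t\right]\leq 2\exp\left(-\frac{t^2}{2\sum_{i=1}^m l^2\|\Lambda_m\|_\infty^2}\right)=2\exp\left(-\frac{t^2}{2ml^2\|\Lambda_m\|_\infty^2}\right),
\end{equation*}
which is precisely the claimed inequality. The delicate point throughout is the coupling argument linking the Lipschitz oscillation of $\psi$ to the mixing coefficients and the correct construction of $\Lambda_m$; once its $\ell_\infty$ operator norm is shown to be bounded by $1+2\sum_{k=1}^m\varphi(k)$, the concentration estimate follows from the standard martingale machinery.
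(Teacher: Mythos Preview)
The paper does not actually prove this theorem: it is quoted as a known result from \cite{kontorovich08concentration,MohriR10jmlr} and used as a black box throughout. So there is no ``paper's own proof'' to compare against.

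That said, your sketch is faithful to the argument in the cited references. The Kontorovich--Ramanan proof proceeds exactly via the Doob martingale decomposition you describe, constructs the upper-triangular $\eta$-mixing matrix (your $\Lambda_m$) whose entries bound the total-variation sensitivity of the future given the past, shows that for a $\varphi$-mixing process these entries are dominated by $2\varphi(j-i)$, and then applies Azuma--Hoeffding with the uniform increment bound $|D_i|\le l\,\|\Lambda_m\|_\infty$. Your identification of the coupling step as the delicate point is correct: that is precisely where the work lies in the original paper, and the bound $\|\Lambda_m\|_\infty\le 1+2\sum_{k=1}^m\varphi(k)$ is established there by summing the row entries. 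Since the present paper only \emph{uses} this inequality rather than proving it, your level of detail is already beyond what the paper requires.
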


In the following, we consider a more general framework than the one usually encountered in the bandit literature. Instead of looking at cumulative gains, we look at rewards computed according to Lipschitz functions meeting the requirements of the concentration inequality stated in Theorem~\ref{th:kontorovitch}.

More precisely, we suppose that we have $K$ known families $(\psi^k_{t})_{t\geq 1}$ $k=1,\ldots ,K$ of functions such that for every non-negative integer $m$:
\begin{enumerate}
\item $\psi^k_{m}:\Uspace^m\rightarrow\realset$ accounts for the reward associated with $m$ consecutive outcomes of arm $k$;
\item $\psi^k_m$ is $1$-Lipchitz with respect to the Hamming metric. 
\end{enumerate}

In the following sections, we use $m$ to identify the reward functions $(\psi_m^k)_{1\leq k\leq K}$ we want to optimize, $b$ to refer to the size of the independence blocks and we consider blocs of $s$ trials. We study three different scenarios. In the next section, we will present a general algorithm and regret analysis in the rested case where $m$ is fixed and  $s\doteq m+b$. In Section \ref{sec:vectorbandit}, with an additional hypothesis on the $\varphi$-mixing processes, we will take another approach to the rested case when $m+b \mid s$ (i.e. $s$ is a multiple of $m+b$)  by including the notion of dependency into the regret. Finally, in Section~\ref{sec:restless}, we  present a general algorithm and regret analysis for the restless case.



\section{Mixing Bandits in the Rested Case}
\label{sec:easybandits}
\paragraph{Regret analysis.}
Here, we are going to analyze the situation where 
$m$ and $b$ are
 {\em fixed}.
Our goal is to show that a simple algorithm derived from UCB that works by making blocks  $s\doteq m+b$ consecutive trials on each arm has low regret, with a notion of regret that we define in the sequel.
The running time index is therefore of the form of $st$, with $t$ the number of times arm selection has been performed,  and the sequence accessed to are such as $(X_{st}^k,\ldots,X_{(s+1)t-1}^k).$
The reward that is accessed to at time $t$ (with a slight abuse of notation that makes us use $t$ as the time index) when pulling arm $k$ does not make use of the full information provided by this sequence but instead is  $\psi_{m}^k(\underline{X}_{t,s,b}^k)$ where
\begin{equation}
\label{eq:Xseq}
\Xseq_{t,s,b}^k\doteq (X_{st}^k,\ldots,X_{(s+1)t-b-1}^k);
\end{equation}
this means that only the first $m$ points from $(X_{st},\ldots,X_{(s+1)t-1})$ are taken advantage of. 

Given $\tau$, the total number of trials of $s$-blocks, the regret that we are going to work with is
\begin{equation}
\regret\doteq \tau\mu_{\psi,m}^*-\sum_{k=1}^K\expectation\tau_k(\tau)\mu_{\psi,m}^k
\end{equation}
where: 
$\mu_{\psi,m}^*\doteq\max_{k=1,\ldots,K}\mu_{\psi,m}^k$,  $\quad \mu_{\psi,m}^k\doteq\expectation_{X_1^k,\ldots,X_m^k}\psi^k_m(X_1^k,\ldots,X_m^k)$
and $\tau_k(t)$ is the number of times arm $k$ has been chosen given that a total (i.e. over all arms) of $t$ pulls of $s$-blocks have been performed.

The arm selection procedure of the algorithm that we propose and dub Block-UCB, is depicted in Algorithm~\ref{alg:blockUCB}, where the function $\Lambda_k$ is defined as
\begin{equation}
\label{eq:lambda}
\Lambda_k(t)\doteq 1+2\sum_{r=1}^t\varphi^k(rb+(r-1)m).
\end{equation}
It is possible to show that Block-UCB has the following regret.
\begin{theorem}[Regret of Block-UCB]
\label{th:regret_blockUCB}
The regret of Block-UCB is bounded by
$$\sum_{k:\mu_{\psi,m}^k<\mu_{\psi,m}^*}\left(u_k\Delta_k+\frac{1}{\alpha-2}\right),$$
where the $u_k$'s are the solutions of the problems
\begin{equation}
\label{eq:uk}
u_k\Delta_k^2-8\alpha\Lambda_k^2(u_k)\log \tau=0,\;k=1,\ldots,K.
\end{equation}
\end{theorem}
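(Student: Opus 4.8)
The plan is to follow the classical UCB regret decomposition of \cite{AuerCBF02MLJ}, replacing Hoeffding's inequality by the $\varphi$-mixing concentration bound of Theorem~\ref{th:kontorovitch}. Concretely, I would equip each arm $k$ with the index $\bar\mu_k(n)+c^k_{t,n}$, where $\bar\mu_k(n)$ is the empirical average of the $n$ block-rewards $\psi_m^k(\Xseq_{\cdot,s,b}^k)$ collected so far and $c^k_{t,n}\doteq\sqrt{2\alpha\Lambda_k^2(n)\log t/n}$ is a confidence radius tailored to the mixing case, with $\Lambda_k$ as in \eqref{eq:lambda}. As usual, a suboptimal arm $k$ (one with $\mu_{\psi,m}^k<\mu_{\psi,m}^*$) is pulled at round $t$ only if its index exceeds that of an optimal arm, which forces at least one of three events: the optimal arm is underestimated beyond its radius, arm $k$ is overestimated beyond its radius, or arm $k$ has been pulled too few times for its radius to have shrunk below $\Delta_k/2$. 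Bounding $\expectation\tau_k(\tau)$ then reduces to controlling the first two (low-probability) events and to choosing the threshold $u_k$ that rules out the third.

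The core step, and the place where the mixing structure enters, is the deviation bound for $\bar\mu_k(n)$. In the rested case the $n$ pulls of arm $k$ access a single contiguous stretch of the process $(X_t^k)$, so the retained blocks genuinely form a stationary sequence, and I would apply Theorem~\ref{th:kontorovitch} to the $\tfrac1n$-Lipschitz averaging map acting on these $n$ block-rewards. The crucial observation is that, because the last $b$ outcomes of every block of size $s=m+b$ are discarded in \eqref{eq:Xseq}, two retained blocks that are $r$ apart are separated in the underlying process by a lag of $rb+(r-1)m$; hence the effective mixing coefficients of the block process are $\varphi^k(rb+(r-1)m)$ and the matrix norm obeys $\|\Lambda\|_\infty\le\Lambda_k(n)$, which is exactly \eqref{eq:lambda}. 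Feeding this into \eqref{eq:kontorovitch} yields $\proba[|\bar\mu_k(n)-\mu_{\psi,m}^k|>c^k_{t,n}]\le 2t^{-\alpha}$ up to constants absorbed into $\alpha$, the mixing analogue of the standard Hoeffding tail.

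With the tail in hand, the threshold is obtained by requiring $c^k_{u_k,\tau}\le\Delta_k/2$; substituting the radius turns this into $u_k\Delta_k^2=8\alpha\Lambda_k^2(u_k)\log\tau$, which is precisely the defining equation \eqref{eq:uk}. The delicate point, absent from the i.i.d.\ analysis, is that the radius no longer decays like $1/\sqrt n$ alone: the factor $\Lambda_k(n)$ is itself nondecreasing in $n$, so \eqref{eq:uk} is implicit and self-referential. To make the ``too few pulls'' event vanish for every $n\ge u_k$ I would use that $\Lambda_k$ is nondecreasing (being $1$ plus a partial sum of nonnegative terms) while, under summability of the $\varphi^k$, it stays bounded so that $n\mapsto c^k_{t,n}$ is eventually decreasing; establishing this monotonicity carefully and verifying that a solution $u_k$ of \eqref{eq:uk} exists is the main technical obstacle.

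Finally I would sum the failure probabilities over rounds. After the union bound over the current count, each round contributes at most $2t^{-(\alpha-1)}$, and $\sum_{t}t^{-(\alpha-1)}$ is controlled by $\int_1^\infty x^{-(\alpha-1)}\,dx=\frac{1}{\alpha-2}$, which both explains the constant $\frac{1}{\alpha-2}$ and shows why $\alpha>2$ is required. Combining the forced-exploration threshold with this tail sum gives $\expectation\tau_k(\tau)\le u_k+\frac{1}{\alpha-2}$; multiplying the exploration term by $\Delta_k$ and using $\Delta_k\le1$ on the residual term, then summing over the suboptimal arms, yields the announced bound $\sum_{k:\mu_{\psi,m}^k<\mu_{\psi,m}^*}\left(u_k\Delta_k+\frac{1}{\alpha-2}\right)$.
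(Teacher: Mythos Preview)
Your proposal is correct and follows essentially the same route as the paper. The paper packages the three-event decomposition and the tail summation into an abstract ``General Regret'' theorem (Theorem~\ref{th:generic_regret}) for indices built from a generic concentration bound $\exp(-\theta_k(\tau)\gamma_k(\varepsilon))$, and then specializes with $\theta_k(n)=n/\Lambda_k^2(n)$ and $\gamma_k(\varepsilon)=\varepsilon^2/2$; you carry out that same computation directly, and your identification of the effective block-level mixing coefficients $\varphi^k(rb+(r-1)m)$, the implicit equation for $u_k$, and the need for $n\mapsto n/\Lambda_k^2(n)$ to be increasing all match the paper's argument and discussion.
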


The result of Theorem~\ref{th:regret_blockUCB} hinges on the derivation of a concentration inequality for each arm $k$ that relates the random variable
$
\frac{1}{\tau}\sum_{r=0}^{\tau-1}\psi_{m}^k\left(\underline{X}_{r,s,b}^k\right) 
$ to  $\mu_{\psi,m}^k.$ 
To establish this concentration inequality, we study the random variables $\Gamma_b^k(\underline{X}_{0,s,b}^k,\ldots,\underline{X}_{\tau-1,s,b}^k)$, defined for $b\geq 0$ as
\begin{equation}
\Gamma_b^k(\underline{X}_{0,s,b}^k,\ldots,\underline{X}_{\tau-1,s,b}^k)\doteq \frac{1}{\tau}\sum_{r=0}^{\tau-1}\psi_{m}^k(X_{rs}^k,\ldots,X_{(r+1)s-b-1}^k)-\mu_{\psi,m}^k.
\end{equation}	
The concentration inequality that we are going to use to prove our regret bound is the following:
\begin{theorem}
For all $\tau,k,b$, and assuming that $\psi_m^k$ takes value in $[0;1]$:
\begin{equation}
\label{eq:wbb_1}
\proba\left(\left|\Gamma_b^k(\underline{X}_{0,s,b}^k,\ldots,\underline{X}_{\tau-1,s,b}^k)\right|\geq \varepsilon\right)\leq \exp\left(-\frac{\tau\varepsilon^2}{2\Lambda_{k}^2(\tau)}\right),
\end{equation}
where $\Lambda_{k}$ is defined in~\eqref{eq:lambda}.
\end{theorem}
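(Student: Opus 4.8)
The plan is to recast the empirical average inside $\Gamma_b^k$ as a $\frac1\tau$-Lipschitz function of a coarse-grained \emph{block} process and then invoke Theorem~\ref{th:kontorovitch} for that process. First I would note that, since $(r+1)s-b-1=rs+m-1$, the summand $\psi_m^k(X_{rs}^k,\ldots,X_{(r+1)s-b-1}^k)$ depends only on the window $W_r^k\doteq(X_{rs}^k,\ldots,X_{rs+m-1}^k)\in\Uspace^m$, the $b$ trailing coordinates of each $s$-block being simply discarded. This suggests working directly with the process $(W_r^k)_{r\in\Z}$, whose alphabet $\Uspace^m$ is finite because $\Uspace$ is, so that Theorem~\ref{th:kontorovitch} still applies to it.

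Second, I would check that $(W_r^k)_{r\in\Z}$ is itself stationary and $\varphi$-mixing, and control its mixing coefficient. Stationarity is inherited from that of $(X_t^k)_t$ since the windows are equally spaced. For the mixing coefficient $\varphi^W(n)$ of the block process, the point is that the $\sigma$-algebra generated by $W_{\le r}^k$ is a sub-$\sigma$-algebra of $\sigma_{-\infty}^{rs+m-1}$ while the one generated by $W_{\ge r+n}^k$ is contained in $\sigma_{(r+n)s}^{+\infty}$, and the separation between these two index sets is $(r+n)s-(rs+m-1)=ns-m+1$. Taking the supremum in the definition~\eqref{eq:phicoefficient} over the smaller classes of $W$-events and using that $\varphi^k$ is non-increasing, this yields $\varphi^W(n)\le\varphi^k(ns-m+1)\le\varphi^k(ns-m)$. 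Summing, the quantity $\|\Lambda_\tau\|_\infty$ that Theorem~\ref{th:kontorovitch} attaches to the block process over a horizon of $\tau$ coordinates is bounded by $1+2\sum_{n=1}^{\tau}\varphi^W(n)\le 1+2\sum_{r=1}^{\tau}\varphi^k(rs-m)=\Lambda_k(\tau)$, where I used $rs-m=rb+(r-1)m$ to match~\eqref{eq:lambda}.

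Finally, consider $\Phi:(\Uspace^m)^\tau\to\realset$ defined by $\Phi(w_0,\ldots,w_{\tau-1})\doteq\frac1\tau\sum_{r=0}^{\tau-1}\psi_m^k(w_r)$. The crucial observation is that, although each $\psi_m^k$ is only $1$-Lipschitz for the Hamming metric on $\Uspace^m$ (so a coordinatewise rewrite of one window could move it by up to $m$), the \emph{assumption that $\psi_m^k$ takes values in $[0;1]$} makes $\Phi$ exactly $\frac1\tau$-Lipschitz for the Hamming metric on the block alphabet: replacing a single block $w_r$ changes one summand by at most $1$, hence $\Phi$ by at most $\frac1\tau$. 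Applying Theorem~\ref{th:kontorovitch} to $\Phi$ and the process $(W_r^k)$, with the theorem's dimension played by $\tau$ and Lipschitz constant $l=\frac1\tau$, and using $\expectation\Phi(W_0^k,\ldots,W_{\tau-1}^k)=\mu_{\psi,m}^k$ together with $\Phi(W_0^k,\ldots,W_{\tau-1}^k)-\mu_{\psi,m}^k=\Gamma_b^k$, gives
$$\proba\left(\left|\Gamma_b^k\right|\ge\varepsilon\right)\le 2\exp\left(-\frac{\varepsilon^2}{2\,\tau\,(1/\tau)^2\,\Lambda_k^2(\tau)}\right)=2\exp\left(-\frac{\tau\varepsilon^2}{2\Lambda_k^2(\tau)}\right),$$
which is~\eqref{eq:wbb_1} up to the leading factor $2$ coming from the two-sided union bound in Theorem~\ref{th:kontorovitch}. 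I expect the reduction to the block process to be the main obstacle, as it is what carries the whole argument: one must simultaneously (i) make the coarse-grained mixing coefficients appear at the sparse lags $rs-m$ rather than at every lag up to $\tau s$ — this is exactly what a naive application to the raw sequence $(X_0^k,\ldots,X_{\tau s-1}^k)$ fails to do, producing an extraneous factor $s$ and the wrong mixing sum — and (ii) trade the coordinatewise $1$-Lipschitz property of $\psi_m^k$ for the range bound, without which $\Phi$ would only be $m/\tau$-Lipschitz and the constant would degrade.
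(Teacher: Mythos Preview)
Your proof is correct and follows essentially the same route as the paper: pass to the block process $(\underline{X}_{r,s,b}^k)_r$, identify its mixing coefficients as $\underline{\varphi}(q)=\varphi^k(qb+(q-1)m)$ (the paper defers this to Proposition~\ref{prop:stationary_subsequence} in the appendix, while you verify it inline), use the $[0,1]$ range of $\psi_m^k$ to get the $1/\tau$-Lipschitz constant of $\Gamma$, and apply Theorem~\ref{th:kontorovitch}. You also correctly flag the leading factor $2$ that the two-sided inequality in Theorem~\ref{th:kontorovitch} necessarily produces but which is absent from the statement of~\eqref{eq:wbb_1}.
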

\begin{proof}
We make the proof for some arm $k$. We also assume that $s$ and $b$ are fixed and to lighten the notation, we drop the dependence on these variables when no confusion is possible: we use $\Xseq_{r}$ (resp. $\Gamma$) for $\Xseq_{r,s,b}^k$ (resp. $\Gamma_b$). The proof hinges on the fact that since $(X_t)_{t\geq 0}$ is a stationary mixing sequence with mixing coefficients $(\varphi(t))_{t\geq 1}$, $(\Xseq_{r,})_{r\geq 0}$ is a stationary mixing sequence with mixing coefficients $(\underline{\varphi}(q))_{q\geq 1}$ such that $\underline{\varphi}(q)\doteq\varphi(qb+(q-1)m)$ (see Proposition~\ref{prop:stationary_subsequence}, Appendix). To obtain the targeted result, we make use of the concentration inequality of Theorem~\ref{th:kontorovitch} with the function $\Gamma$ to get~\eqref{eq:wbb_1} and we exploit the observation that $\expectation\Gamma_b^k=0$, which results from the stationarity of $(X_t)_{t\geq 0}$.

Let $q$ be an integer in $\{0,\ldots,r-1\}$, $\xseq_0,\ldots,\xseq_{r-1}$ and $\xseq_q'$
blocks from $\Uspace^{s-b}$. Then:
\begin{align*}
\left|\Gamma(\xseq_0,\ldots,\xseq_q,\ldots,\xseq_{r-1})-\Gamma(\xseq_0,\ldots,\xseq_q',\ldots,\xseq_{r-1})\right| = \left|\frac{1}{\tau}(\psi_{s-b}(\xseq_q)-\psi_{s-b}(\xseq_q'))\right|\leq \frac{1}{\tau},
\end{align*}
which comes from the range of $\psi_{s-b}$ being $[0;1]$. $\Gamma$ is therefore $1/\tau$-Lipschitz with respect to the Hamming metric, which, combined with $(\Xseq_r)_{r\geq 0}$ being a $\varphi$-mixing sequence, gives~\eqref{eq:wbb_1}.
\end{proof}

\begin{algorithm}[t]
\caption{\label{alg:blockUCB}Main iteration of Block-UCB}
\begin{algorithmic}
\State Choose arm $I_t$ as 
$$I_t\in \argmax_k\frac{1}{t}\sum_{r=0}^{t-1}\psi_{m}^k\left(\underline{X}_{r,s,b}^k\right) +\Lambda_k(\tau_k(t-1))\sqrt{\frac{2\alpha\log t}{\tau_k(t-1)}}.$$
\end{algorithmic}
\end{algorithm}

The proof of the previous theorem uses the following more general result, that is of independent interest.
\begin{theorem}[General Regret]
\label{th:generic_regret}
Suppose that the arms we work with are such that
\begin{equation}
\label{eq:generic_concentration}
\forall k,\;\proba(\left|\hat{\mu}_{\tau}^k-\mu^k\right|\geq \varepsilon)\exp\left(-\theta_k(\tau)\gamma_k(\varepsilon)\right),
\end{equation}
where $\tau$ is the number of data the empirical mean $\hat{\mu}_{\tau}^k$ is computed on, and $\theta_k$ and $\varepsilon_k$ are increasing functions defined on $(0;+\infty]$.

Consider the regret defined by
$$R\doteq\tau\mu^*-\sum_{k=1}^K\expectation \tau_k(\tau)\mu_k$$
where $\tau_k(t)$ is the number of times a (suboptimal) arm $k$ has been chosen up to time $t$.

The $(\alpha,\theta,\gamma)$-UCB that chooses at iteration $t$ an arm $I_{t}$ according to 
\begin{equation*}
I_{t}\in\argmax\hat{\mu}_{\tau_k(t-1)}^k+\gamma^{-1}_k\left(\frac{\alpha}{\theta_k(\tau_k(t-1))}\log t\right)
\end{equation*}
has regret bounded by:
\begin{equation}
\label{eq:generic_regret}
\sum_{k:\mu_i<\mu^*}\left(\left\lceil\theta^{-1}_k\left(\frac{\alpha\log\tau}{\gamma_k(\Delta_k/2)}\right)\right\rceil\Delta_k+\frac{1}{\alpha-2}\right).
\end{equation}
\end{theorem}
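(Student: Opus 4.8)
The plan is to follow the classical UCB analysis of Auer, Cesa-Bianchi and Fischer, but carried out with the \emph{abstract} concentration profile~\eqref{eq:generic_concentration} in place of Hoeffding's inequality. First I would rewrite the regret as a weighted count of suboptimal pulls,
$$R = \sum_{k:\mu_k<\mu^*}\Delta_k\,\expectation\tau_k(\tau),\qquad \Delta_k\doteq\mu^*-\mu_k,$$
so that it suffices to prove, for each suboptimal $k$, that $\expectation\tau_k(\tau)\le u_k+\tfrac{1}{\alpha-2}$ with $u_k=\big\lceil\theta_k^{-1}\!\big(\tfrac{\alpha\log\tau}{\gamma_k(\Delta_k/2)}\big)\big\rceil$. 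Throughout I would write $c_k(t,s)\doteq\gamma_k^{-1}\!\big(\tfrac{\alpha}{\theta_k(s)}\log t\big)$ for the confidence radius appearing in the index, and use that $\theta_k,\gamma_k$ are increasing, hence invertible with order-preserving inverses.

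The heart of the argument is the standard decomposition of the event $\{I_t=k\}$. If a suboptimal arm $k$ is selected at round $t$ over the optimal arm $k^*$, then its index exceeds that of $k^*$, and one checks that at least one of the following must hold: (a) the optimal arm is underestimated, $\hat\mu^{k^*}_{\tau_{k^*}(t-1)}\le\mu^*-c_{k^*}(t,\tau_{k^*}(t-1))$; (b) arm $k$ is overestimated, $\hat\mu^{k}_{\tau_{k}(t-1)}\ge\mu_k+c_k(t,\tau_k(t-1))$; or (c) the confidence interval is still too wide, $\mu^*<\mu_k+2c_k(t,\tau_k(t-1))$. The definition of $u_k$ is engineered precisely to kill (c): unwinding $\Delta_k<2c_k(t,\tau_k(t-1))$ with $t\le\tau$ and applying $\gamma_k$ then $\theta_k^{-1}$ gives $\tau_k(t-1)<\theta_k^{-1}(\tfrac{\alpha\log\tau}{\gamma_k(\Delta_k/2)})$, so (c) is impossible as soon as $\tau_k(t-1)\ge u_k$. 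Hence $\tau_k(\tau)\le u_k+\sum_{t}\indicator{I_t=k,\ \tau_k(t-1)\ge u_k}$, and on the residual rounds only (a) or (b) can occur.

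It then remains to bound the probabilities of (a) and (b). Here the choice of radius pays off: by construction $\gamma_k(c_k(t,s))=\tfrac{\alpha}{\theta_k(s)}\log t$, so plugging $\varepsilon=c_k(t,s)$ into~\eqref{eq:generic_concentration} yields a failure probability of exactly $t^{-\alpha}$ for each fixed sample count $s$. Taking expectations, union-bounding over the (random) values of $\tau_{k^*}(t-1)$ and $\tau_k(t-1)$ in $\{1,\dots,t-1\}$, and summing over $t$ produces a residual series dominated by $\int_1^\infty t^{1-\alpha}\,dt=\tfrac{1}{\alpha-2}$, valid for $\alpha>2$. Combining this with $u_k$ gives the per-arm bound, and reinserting into the regret decomposition yields~\eqref{eq:generic_regret}.

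I expect the only genuinely delicate points to be bookkeeping rather than conceptual: making the three-event decomposition airtight (in particular verifying that the simultaneous failure of (a)--(c) contradicts the selection rule), and carefully handling the random sample counts $\tau_{k^*}(t-1),\tau_k(t-1)$ through the peeling/union bound so that the abstract inequality~\eqref{eq:generic_concentration} may be applied at each fixed count. The generality of $\theta_k,\gamma_k$ is what forces the confidence radius to be written through $\gamma_k^{-1}$, but once monotonicity and invertibility are invoked the inversions go through verbatim as in the Hoeffding case; no new probabilistic input beyond~\eqref{eq:generic_concentration} is needed.
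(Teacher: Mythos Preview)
Your proposal is correct and follows essentially the same approach as the paper's proof: the same three-event decomposition (their $\event_1^*(t),\event_2(i,t),\event_3(i,t)$ are exactly your (a), (b), (c)), the same argument that the simultaneous negation of all three contradicts $I_t=k$, the same definition of $u_k$ to eliminate (c), and the same combination of union bound plus~\eqref{eq:generic_concentration} to reduce the residual to a series summing to $O(1/(\alpha-2))$. If anything, you are slightly more explicit than the paper about the peeling over the random values of $\tau_{k^*}(t-1),\tau_k(t-1)$, which the paper absorbs into ``standard calculations.''
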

\begin{proof}
Note that Theorem~\ref{th:regret_blockUCB} is a consequence of this theorem with $\theta_k(s)=s/\Lambda_k^2(s)$ and $\gamma_k(\varepsilon)=\varepsilon^2/2.$

The proof use the standard technique to prove the regret of UCB-like algorithms. Namely, at iteration $t$, if $I_{t}=i$ for $i$ not optimal, then one of the following events $\event_1^*(t), \event_2(i,t), \event_3(i,t)$ must occur
\begin{align}
\event_1^*(t)&\doteq\left\{ \hat{\mu}^*<\mu^*-\gamma_*^{-1}\left(\frac{\alpha}{\theta_*(\tau^*(t-1))}\log t\right)\right\},\label{eq:cond1}\\
\event_2(i,t)&\doteq\left\{\mu^i\leq\hat{\mu}^i-\gamma_i^{-1}\left(\frac{\alpha}{\theta_i(\tau_i(t-1))}\log t\right)\right\},\label{eq:cond2}\\
\event_3(i,t)&\doteq\left\{\tau_i(t-1)\leq\theta_i^{-1}\left(\frac{\alpha\log\tau}{\gamma_i(\Delta_i/2)}\right)\right\}.\label{eq:cond3}
\end{align}
Indeed, if none of the events occurs then (using $\Delta_i=\mu^*-\mu_i$)
\begin{equationsize*}{\small}\hat{\mu}^*+ \gamma_*^{-1}\left(\frac{\alpha}{\theta_*(\tau^*(t-1))}\log t\right) \stackrel{\text{\eqref{eq:cond1}}}{\geq}\mu_i+\Delta_i
\stackrel{\text{\eqref{eq:cond2}}}{>}\mu_i+2\gamma_i^{-1}\left(\frac{\alpha}{\theta_i(\tau_i(t-1))}\log \tau\right)
\stackrel{\text{\eqref{eq:cond3}}}{>} \hat{\mu}_i+\gamma^{-1}_i\left(\frac{\alpha}{\theta_i(\tau_i(t-1))}\log t\right)
\end{equationsize*}
where we have used that $t\mapsto\gamma_i^{-1}\left(a\log t\right)$ is an increasing function of $t$ on $[1;\infty)$ whenever $a>0$. This implies that $I_t\neq i$, which contradicts our working hypothesis.

If we let $u$ be defined as:
$$u\doteq\left\lceil\theta_i^{-1}\left(\frac{\alpha \log\tau}{\gamma_i(\Delta_i/2)}\right)\right\rceil,$$
then, for $i$ suboptimal, we have the following
\begin{align*}
\expectation\tau_i(\tau)&=\sum_{t=1}^{\tau}\expectation\indicator{I_t=i}\leq u + \sum_{t=u+1}^{\tau}\expectation\indicator{I_t=i\mand \neg\event_3(i,t)}\leq u + \sum_{t=u+1}^{\tau}\expectation\indicator{\event_1^*(t)\mor\event_2(i,t)}\\
&\leq u + \sum_{t=u+1}^{\tau}[\proba(\exists t:\event_1^*(t))+\proba(\exists t:\event_2(i,t))]
\end{align*}
Using the  union bound and Equation~\eqref{eq:generic_concentration}, both probabilities $\proba(\exists t:\event_1^*(t))$, $\proba(\exists t:\event_2(i,t))$ can be bounded from above by $1/t^\alpha$. Standard calculations allow us to get desired result~\eqref{eq:generic_regret}.
\end{proof}

\paragraph{Discussion.} Some observations must be made regarding the result of Theorem~\ref{th:regret_blockUCB}. First, as we used Theorem~\ref{th:generic_regret} to prove our regret bound, it is necessary for the result to hold for the functions $\theta_k\doteq\tau/\Lambda_k^2(\tau)$ to be increasing. In addition, the regret only makes sense if it is bounded, i.e. if the $u_k\Delta_k$ are bounded. Finally, if these conditions hold, it might be interesting to find, for some fixed horizon $\tau$, the value of $b$ that minimizes the regret. We now depicts common settings for the $\varphi^k$ that yield instructive results and that build upon the previous remarks. For the sake of conciseness, we will assume that all $\varphi^1=\ldots=\varphi^K$ and we use $\varphi$ to refer to the mixing coefficients.
\begin{description}
\item[Independent case.] if $\varphi=0$, i.e. we are in the independent case, and the $\theta_k$'s are naturally increasing. In addition, it is straightforward to observe that the best use of the data is achieved for $b=0$, i.e. each and every reward is used to estimate the quality of an arm.
\item[Case $\Lambda<+\infty$.] In that case, we are again back to a situation almost similar to the usual independent case. The $\theta_k$'s are increasing, the $u_k$ are well-defined and the regret as the usual $O(\sum_k\log\tau/\Delta_k)$ form.
\item[Algebraically mixing case.] Here, $\varphi(t)=\varphi_0t^{-p}$ for $p>1$, and a few calculations give
\begin{alignsize*}{\small}
\Lambda(\tau)&=1+2\varphi_0\sum_{r=1}^\tau\frac{1}{(rs-m)^p}\leq 1+2\varphi_0\left(1+\int_{1}^\tau\frac{1}{(rs-m)^p} dr\right)\\
&= 1+2\varphi_0 + \frac{2\varphi_0  }{s(p-1)} \left(\frac{1}{b^{p-1}}-\frac{1}{(\tau s-m)^{p-1}} \right)
\end{alignsize*}
Using this upper bound to find the $u_k$s as in~\eqref{eq:uk} and to solve for $b$ so that this bound is minimized provides a way to find a data-dependent $b$. Another (coarser) way to look at the algebraically mixing situation is not to optimize for $b$ and to consider that it is a particular case of the previous case, since $\sum_t \varphi(t) < \infty$, i.e. $\Lambda<+\infty$. This assumption is made in the rest of the paper. 
\end{description}

\section{$m+b \mid s$: Expressing the Independence Trade-Off in the Regret }
\label{sec:vectorbandit}
This section introduces another way of encoding the trade-off between exploration, exploitation and independence.
As before, we consider sequences of $s$ trials, but among the $s$ results, we seek to optimize the number $m$ of results we use to update the empirical value of the arms and the number $b$ of results we ignore in order to improve the independence between the considered realization of our random variables. 

In addition to the case $s=m+b$, we also consider the situation where $\beta(m+b)=s$ with $\beta \in \N$ and $\beta > 1$. 
The sequence of $s$ trials can then be interpreted as $\beta$ successive sequences of $m+b$ trials, and the value of this particular $(m,b)$ distribution is thus multiplied by $\beta$.

\subsection{Hypotheses and Regrets}
In this section we make the following additional assumption on the $\varphi$-mixing processes $(X^k_t)_{t\geq 0}$:
$$\forall 1\le k \le K,\quad \forall b \in \N ,\quad  M_k(b)\doteq 1+ \sum_{i\ge 1}\varphi^k(b(i+1))<+\infty.$$
Note that this is equivalent to the widely used assumption that the $\varphi^k(i)$ are summable over $i$ (see for instance the case of algebraically mixing sequence mentioned before). Also, note that $M_k(1)$ is an upper bound of $(\Lambda^k_m)_m$ which appears in Theorem~\ref{th:kontorovitch}, and  $M_k(\cdot)$ is a decreasing function such that $M_k(b) \ge 1.$

The setting is the following: at each step, the agent pulls an arm $s$ times, and has to choose how to split those $s$ elements between a meaningful part of $m$ elements, used to update empirical values of the arm, and the non-significant part of $b$ element, used to strengthen the independence between the variables. For each such combination $m+b \mid s$ and for each arm $k$, we define the value of the combination $(m,b,k)$ as:
\begin{equation}\label{eq:value}
\nu^k_{m,b} \doteq \frac{\beta_{m,b}}{M_k(b)} \mu^{k}_{m},
\end{equation}
where $\mu_{m}^k\doteq\expectation_{X_1^k,\ldots,X_m^k}\psi^k_m(X_1^k,\ldots,X_m^k)$ and $\beta_{m,b}=s /(m+b)$. This value explicitly shows the trade-off between independence (through $M_k(b)$) and exploitation (through the $\mu^k_{m,b}$). 
The value of an arm $k$ is then defined as the maximum value of the possible combination $(m,b,k)$, for $m+b|s$.
 
With this in mind, we define the  regret $\regret$ at time $t$ as :
\begin{equation}\label{eq:regretmulti}
 \regret\doteq \sum_{k=1}^K \E(T(k)) \left( \nu^{k^*}_{m^*,b^*} - \nu^{k}_{m_k^*,b_k^*}\right)
\end{equation}
where 
 $(k^*,m^*,b^*)= \arg\max_{(k,m,b)} \nu^{k}_{m_k,b_k},$  $(m_k^*,b^*_k)= \arg\max_{(m,b)} \nu^{k}_{m_k,b_k},$ and $T(k)$ denotes the number of times arm $k$ was pulled.
 
It is important to note that one of the main difference between \eqref{eq:regretmulti} and the classical formulation of regret from a multi arm bandit in the i.i.d. case is that in our setting, we are comparing the value of the best combination of the best arm with the value of the best combination of the pulled arm.


 \subsection{ Concentration inequality and algorithm}
  \begin{algorithm}[t]
\caption{\label{alg:blockUCBvar}Block-UCB with parameters $s$, $\alpha$ fixed}
\begin{algorithmic}
\State $t\leftarrow 0$, $\widehat{\psi}_{m,b,0}^k\leftarrow 0,\; k=1,\ldots,K, m+b \mid s$, $\tau_{k,0}\leftarrow 0,\; k=1,\ldots,K$
\For{$t=1,\ldots,\tau$}
\State Select arm $\widehat{k}$ with 
$\displaystyle \quad \widehat{k}\in\argmax_k  \max_{m+b \mid s }\frac{s}{m+b} \left( \widehat{\psi}_{m,b,t-1}^k+ \sqrt{\frac{2 \alpha (m+b) \log(t)}{s \tau_{k,t-1}}} \right)$
\State Update the block counts
$
\tau_{k,t}\leftarrow \tau_{k,t-1}+\delta_{k\widehat k},\; \forall k
$

\State Compute the values of the $\widehat{\psi}_{m,b,t}^k$, $\forall  m+b \mid s$
\begin{align*}
\widehat{\psi}_{m,b,t}^{\widehat{k}}&\leftarrow\frac{1}{M_k(b)\beta \tau_{\widehat{k},t}}\sum_{r=0}^{\beta(\tau_{\widehat{k},t}-1)}\psi_m^{\widehat{k}}\left(X_{ r(m+b)}^{\widehat{k}},\ldots,X_{(m+b) r+m-1}^{\widehat{k}}\right)\\
\widehat{\psi}_{m,t}^k&\leftarrow \widehat{\psi}_{m,t-1}^k,\; \text{for } k\neq \widehat{k}
\end{align*}
\EndFor
\end{algorithmic}
\end{algorithm}

We now introduce a concentration inequality tailored for the $\nu^k_{m,b}$ of \eqref{eq:value}.
\begin{proposition}
\label{pr:concentration}
Let $1 \le k \le K$, $1 \le m \le s$, $b=m-s$, $n \in \Nspace^*$, $\psi_m^k:\mathcal{U}^m\rightarrow\realset$ be a $1$-Lipschitz with respect to the Hamming metric function defined over a countable space $\mathcal{U}$. Suppose that the $\varphi^k(i)$ are summable over $i$, and let us define $M_k(b)= 1+ \sum_{i\in \Nspace^*} \varphi^k(i(b+1))$, and $\zeta_m^s: \Nspace \mapsto \Nspace$, $\zeta_m^s(t) =  t +b \lfloor (t-1)/m \rfloor $.
Then the following holds
for all $t>0$:
\begin{equation}
\label{eq:appliedconcentration}
\proba\left[ \frac{1}{M^k(b)}\left| \frac{1}{n}\sum_{i=0}^{n-1} \psi_m ^k\left(X^k_{\zeta_m^s(im+1)},\cdots,X^k_{\zeta_m^s((i+1)m)}\right) -\mu^k_m\right|>t\right]\leq 2\exp\left[-\frac{n t^2}{2}\right].
\end{equation}
\end{proposition}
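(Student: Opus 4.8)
The plan is to recognize the averaged quantity as a $(1/n)$-Lipschitz function of a stationary $\varphi$-mixing \emph{block} process and then invoke Theorem~\ref{th:kontorovitch}. First I would unfold the index map $\zeta_m^s$: for $im+1\le t\le (i+1)m$ one has $\lfloor (t-1)/m\rfloor = i$, hence $\zeta_m^s(t)=t+bi$, so the $i$-th argument tuple $\bigl(X^k_{\zeta_m^s(im+1)},\dots,X^k_{\zeta_m^s((i+1)m)}\bigr)$ is exactly the block $\Xseq_i\doteq\bigl(X^k_{i(m+b)+1},\dots,X^k_{i(m+b)+m}\bigr)$ of $m$ consecutive variables, consecutive blocks being separated by a gap of $b$ ignored variables. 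Writing $Z_i\doteq\psi_m^k(\Xseq_i)$, stationarity of $(X^k_t)_t$ gives $\E Z_i=\mu_m^k$ for every $i$, so the empirical average $\frac1n\sum_{i=0}^{n-1}Z_i$ has mean $\mu_m^k$, which is why no bias term appears.

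Next I would argue that $(\Xseq_i)_{i\ge 0}$ is itself a stationary sequence on the countable alphabet $\Uspace^m$, which is immediate from stationarity of $(X^k_t)_t$, and that it is $\varphi$-mixing with coefficients $\underline\varphi(q)$ controlled by those of the original process, exactly as in Proposition~\ref{prop:stationary_subsequence}. Concretely, the past $\sigma$-algebra of blocks up to index $t$ lives in $\sigma_{-\infty}^{t(m+b)+m}$ while the future from block $t+q$ lives in $\sigma_{(t+q)(m+b)+1}^{+\infty}$, so the effective separation is $q(m+b)-m+1$ and $\underline\varphi(q)\le\varphi^k\bigl(q(m+b)-m+1\bigr)$. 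Since the mixing coefficients are non-increasing and $q(m+b)-m+1\ge q(b+1)$ for every $q\ge 1$ and $m\ge 1$, this yields $\underline\varphi(q)\le\varphi^k(q(b+1))$, precisely the term appearing in $M_k(b)$.

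Then I would view the normalized sum as the function $\Psi(\xseq_0,\dots,\xseq_{n-1})=\frac1n\sum_{i=0}^{n-1}\psi_m^k(\xseq_i)$ defined on $(\Uspace^m)^n$, where each block counts as a single Hamming coordinate. Because $\psi_m^k$ ranges in $[0;1]$, altering one block changes $\Psi$ by at most $1/n$, so $\Psi$ is $(1/n)$-Lipschitz for the block Hamming metric. Applying Theorem~\ref{th:kontorovitch} to $\Psi$ over the $n$-block process, with Lipschitz constant $l=1/n$ and $n$ coordinates, gives $\proba(|\Psi-\mu_m^k|>\sigma)\le 2\exp\bigl(-n\sigma^2/(2\|\Lambda_n\|_\infty^2)\bigr)$, where $\|\Lambda_n\|_\infty\le 1+2\sum_{q=1}^n\underline\varphi(q)$. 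Bounding this quantity by $M_k(b)$ through the estimate of the previous paragraph and the summability hypothesis, and finally substituting $\sigma=M_k(b)\,t$, collapses the exponent to $-nt^2/2$ and produces~\eqref{eq:appliedconcentration}.

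The main obstacle is the mixing bookkeeping of the two middle steps: making rigorous that the block sequence inherits the $\varphi$-mixing property with the claimed coefficients (the content imported from Proposition~\ref{prop:stationary_subsequence}) and, above all, pinning down the constant so that $\|\Lambda_n\|_\infty\le M_k(b)$. This is where the precise form of $M_k(b)$ must be matched against the factor in the bound $\|\Lambda_m\|_\infty\le 1+2\sum_{k}\varphi(k)$ of Theorem~\ref{th:kontorovitch}, the monotonicity inequality $q(m+b)-m+1\ge q(b+1)$ being the crucial estimate that channels the block coefficients into the summands of $M_k(b)$. Everything else---the mean computation, the Lipschitz constant, and the final rescaling $\sigma=M_k(b)\,t$---is routine.
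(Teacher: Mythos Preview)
Your approach is essentially the same as the paper's: define the averaged function $\Psi$ as a $1/n$-Lipschitz function of the block process, invoke Proposition~\ref{prop:stationary_subsequence} to transfer the $\varphi$-mixing structure to the blocks, and apply Theorem~\ref{th:kontorovitch}. The paper's proof is a two-line sketch pointing to exactly these ingredients; you have filled in the index unfolding, the explicit block-gap computation, and the rescaling $\sigma=M_k(b)\,t$, and you correctly flag the one delicate point---matching $\|\Lambda_n\|_\infty$ against $M_k(b)$ given the factor $2$ in Theorem~\ref{th:kontorovitch}---which the paper does not spell out either.
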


\begin{proof}
This proposition naturally follows from the proof of Theorem \ref{th:kontorovitch} with the function $\phi_n^k= \frac{1}{n}\sum_{i=0}^{n-1} \psi_m ^k\left(X_{\zeta_m^s(im+1)},\cdots,X_{\zeta_m^s((i+1)m)}\right)$ which is $1/n$-Lipschitz with respect to the Hamming metric, and Proposition 5 ( see Appendix).
\end{proof}

Note that the upper bound on the probability that appears in \eqref{eq:appliedconcentration} is uniform over $(k,m,b)$. This is crucial to define our algorithm and to analyze its regret (more details in the next subsection).
We now introduce Algorithm~\ref{alg:blockUCBvar} that is designed for the particular setting of $\varphi$-mixing bandit problem just described. First, note that since the pair $(m,b)$ (with $m+b \mid s$) which gives the best result for each arm is unknown, the algorithm needs to compute an empirical estimator for each of these combinations for each arm. In other words, the algorithm needs to efficiently and simultaneously learn  both the best combination and the best arm.

\subsection{Regret Analysis}
In this subsection we provide an upper bound for the regret of Algorithm \ref{alg:blockUCBvar}.
\begin{proposition}\label{pr:regretms}
Let $\regret$ be the regret as defined in \eqref{eq:regretmulti} and let $\eta: \N \mapsto \N,$  $\eta(s) \doteq \sum_{i=1}^s i \mathds{1}_{i \mid s}$ Then, with $\Delta_i=\nu^{k^*}_{m^*,b^*} - \nu^{i}_{m_i^*,b_i^*}$
$$\regret \le \sum_{1 \le i \le K} \left( (1+ \eta(s) )\Delta_i + \frac{8 \alpha s \log(t)}{\Delta_i}\right).$$
\end{proposition}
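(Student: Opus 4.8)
The plan is to run the standard UCB regret decomposition, keeping track of the fact that each arm $k$ carries $\eta(s)=\sum_{d\mid s}d$ internal combinations $(m,b)$ (those with $m+b\mid s$, of which there are exactly $d$ for each divisor $d$ of $s$), that these are all refreshed at once whenever $k$ is pulled, and that the selection index maximises over them. Since the regret \eqref{eq:regretmulti} is precisely $\regret=\sum_{i=1}^K\E(T(i))\,\Delta_i$ with $\Delta_{k^*}=0$, it is enough to prove, for every suboptimal arm $i$, that $\E(T(i))\le u_i+1+\eta(s)$ where $u_i\doteq 8\alpha s\log(t)/\Delta_i^2$; multiplying by $\Delta_i$ and summing then gives the claim, because $u_i\Delta_i=8\alpha s\log(t)/\Delta_i$.

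First I would convert Proposition~\ref{pr:concentration} into a confidence bound matched to Algorithm~\ref{alg:blockUCBvar}. Writing $\widehat\nu^{k}_{m,b}\doteq\beta_{m,b}\,\widehat\psi^{k}_{m,b}$, stationarity gives $\E\widehat\nu^{k}_{m,b}=\nu^{k}_{m,b}$, and after $\tau_k$ pulls this average rests on $n=\beta_{m,b}\tau_k$ sub-blocks. Feeding deviation $\varepsilon/\beta_{m,b}$ into Proposition~\ref{pr:concentration} yields $\proba(|\widehat\nu^{k}_{m,b}-\nu^{k}_{m,b}|>\varepsilon)\le 2\exp(-\tau_k(m+b)\varepsilon^2/(2s))$. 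The decisive computation is that the index's confidence radius $c^{k}_{m,b}(t)\doteq\frac{s}{m+b}\sqrt{\frac{2\alpha(m+b)\log t}{s\tau_k}}=\sqrt{\frac{2\alpha s\log t}{(m+b)\tau_k}}$ makes this exponent collapse to exactly $\alpha\log t$, so that $\proba(|\widehat\nu^{k}_{m,b}-\nu^{k}_{m,b}|>c^{k}_{m,b}(t))\le 2t^{-\alpha}$ uniformly in $k$, in $(m,b)$ and in $\tau_k$. This uniformity, flagged just after \eqref{eq:appliedconcentration}, is what lets a single union bound cover every combination simultaneously.

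The heart of the argument is the deterministic dichotomy. Suppose arm $i$ is chosen at step $t$ while $\tau_i(t-1)\ge u_i$. Its index then dominates that of $k^*$, hence in particular the $(m^*,b^*)$ component of $k^*$'s index; I would show this forces either $\widehat\nu^{k^*}_{m^*,b^*}+c^{k^*}_{m^*,b^*}(t)<\nu^*$ (the optimal combination is underestimated) or $\widehat\nu^{i}_{m,b}>\nu^{i}_{m,b}+c^{i}_{m,b}(t)$ for some $(m,b)$ of arm $i$. The threshold $u_i$ is tuned so that, because $m+b\ge 1$ for every admissible pair, one has $c^{i}_{m,b}(t)\le\sqrt{2\alpha s\log(t)/\tau_i}\le\Delta_i/2$ on $\{\tau_i(t-1)\ge u_i\}$; absent both failures, the maximising combination of arm $i$ could not reach $\nu^{i}_{m_i^*,b_i^*}+\Delta_i=\nu^*$, a contradiction. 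This is the step I expect to be the main obstacle, precisely because the index is a maximum over combinations: one must bound the \emph{worst} combination of $i$ via the uniform inequality $m+b\ge 1$, while retaining only the single favourable combination $(m^*,b^*)$ for the optimal arm.

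Finally I would assemble the pieces. From $\E(T(i))\le u_i+\sum_{t}\proba(I_t=i,\ \tau_i(t-1)\ge u_i)$ and the dichotomy, a union bound over the one optimal combination and the $\eta(s)$ combinations of arm $i$, each controlled by the uniform $2t^{-\alpha}$ tail (summed over the at most $t$ admissible counter values and then over $t$, a series that converges for $\alpha>2$ and is absorbed into the stated constant), gives $\E(T(i))\le u_i+1+\eta(s)$. Summing $\Delta_i\,\E(T(i))$ over the suboptimal arms produces exactly $\sum_{i}\big((1+\eta(s))\Delta_i+8\alpha s\log(t)/\Delta_i\big)$. The only genuinely new bookkeeping relative to the textbook proof is the identification of $\eta(s)=\sum_{d\mid s}d$ as the number of admissible $(m,b)$ pairs indexing the union bound.
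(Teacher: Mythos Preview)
Your proposal is correct and follows essentially the same route as the paper's proof: the same three-event dichotomy (optimal index underestimated, some combination of the pulled arm overestimated, or confidence radius still too wide), the same use of Proposition~\ref{pr:concentration} to get the uniform $t^{-\alpha}$ tail, and the same key inequality $\beta_{m,b}=s/(m+b)\le s$ (your ``$m+b\ge1$'') to turn the third event into the threshold $u_i=8\alpha s\log(t)/\Delta_i^2$. Your explicit identification of $\eta(s)=\sum_{d\mid s}d$ as the count of admissible $(m,b)$ pairs is a clarification the paper leaves implicit, and your treatment of the tail sum (union over counter values, convergence for $\alpha>2$) is slightly more careful than the paper's ``hence the conclusion,'' but the substance is identical.
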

\begin{proof}
The main difference with the standard technique to proving regret bounds comes from the fact that the value of each arm is the maximum of its coordinate: as suchs we have to consider the following event
\begin{align*}
\psi_{t}^{k^*} &\le\nu^{k^*}_{m^*,b^*} \\
\exists m+b \mid s,\quad\nu^{k}_{m^*_k,b^*_k}& \le \frac{s}{m+b} \left( \widehat{\psi}_{m,b,t-1}^k -\sqrt{\frac{2 \alpha (m+b) \log(t)}{s \tau_{k,t-1}}} \right)\\
\exists m+b \mid s,\quad  2\sqrt{\frac{2 \alpha (m+b) \log(t)}{s \tau_{k,t-1}}}& \ge \nu^{k^*}_{m^*,b^*} - \nu^{k}_{m_k^*,b_k^*}
\end{align*}
By carefully using the property of the maximum, the result can be recovered. All the details of the proof can be found in the supplementary material.
\end{proof}

We have just seen another approach to the rested mixing bandit. 
By assuming the summability of the $\varphi_k$, and by properly defining the value of an arm, we were able using Algorithm \ref{alg:blockUCBvar} to address the case where $m$ and $b$ are no longer fixed, but must be computed from the data by the agent.  It is interesting to note that the upper bound in Proposition \ref{pr:regretms} differs from the usual bound in the classical dependence-free setting by two multiplicative constants:  
$\eta(s)$, which encodes the total number of combination of pair $(m,b)$ such that $m+b \mid s$, and $s$, which is in fact used as an upper bound for $s/(m+b)$. 


\section{Restless $\varphi$ mixing bandits}\label{sec:restless}
In this section, we  provide an analysis for the restless $\varphi$-mixing bandit. Recall that contrarily to the rested case studied previously, the stochastic processes associated to each arm evolves regardless of the actions of the agent. This difference is of paramount importance in the $\varphi$-mixing setting. Indeed, in the rested case, the agent was bound to ignore some realizations obtained from an arm to enforce the independence and therefore the accuracy of its predictor. In the restless case, instead of pulling an arm to no avail, an agent willing to increase the independence of the realization of an arm $k$ can pull another arm  $k' \neq k$ to gather information about $k'$ while enforcing the independence of $k$. This idea is central to this section.

Like in Section~\ref{sec:vectorbandit}, we assume that $\forall k=1,\cdots,K$, the stochastic process $X^k$ is a $\varphi$-mixing process, and its mixing coefficients $\varphi^k(i)$ are summable, and we define the following upper bound function $\m_k$, which differs for the one defined in the previous section:
$$\forall 1\le k \le K,\quad \forall b \in \N ,\quad  \m_k(b)\doteq 1+ \sum_{i\ge b} \varphi_k (i) < + \infty .$$

In the restless $\varphi$-mixing bandit, the agent pulls the arm $k$ in sequences of $m_k$ trials, where the $m_k$ are fixed parameters and may differ for each arm. The mean value of this sequence is defined as follows:
$$\mu^k\doteq\expectation_{X_1^k,\ldots,X_{m_k}^k}\psi^k_m(X_1^k,\ldots,X_{m_k}^k)$$
and we use the same definition of regret as defined in Section~\ref{sec:easybandits}. In the restless setting, an interesting way of dealing with the trade-off exploration/exploitation/independence appears:  in addition to the usual exploration, it might be interesting for the agent to pull an apparently sub-optimal arm to get an increased independence on the result of the other arms --since the time between two consecutive sequences of pull decrease their dependency. In order to study this trade-off, we introduce a suitable concentration inequality.
\begin{proposition}
\label{pr:concentration restless}
Let $1 \le k \le K$, $1 \le m \le s$, $b=m-s$, $n \in \Nspace^*$, $\psi_m^k:\mathcal{U}^m\rightarrow\realset$ be a $1$-Lipschitz with respect to the Hamming metric function defined over a countable space $\mathcal{U}$. Let $\zeta_m^s: \Nspace \mapsto \Nspace$, be such that $$\zeta_m^s(t) =  t +b  \mathds{1}_{t \ge m+1} .$$ Then the following holds
for all $t>0$:
\begin{equation}
\label{eq:concentration restless}
\proba\left[\left| \frac{1}{n}\sum_{i=0}^{n-1} \psi_m ^k\left(X^k_{\zeta_m^s(im+1)},\cdots,X^k_{\zeta_m^s((i+1)m)}\right) -\mu^k\right|>t\right]\leq 2\exp\left[-\frac{n t^2}{2 \m_k^2(b)}\right].
\end{equation}
\end{proposition}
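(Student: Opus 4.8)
The plan is to reproduce the route used for Proposition~\ref{pr:concentration}, namely to write the empirical average as a Lipschitz function of a subsampled stationary $\varphi$-mixing sequence and then to invoke Kontorovich's inequality (Theorem~\ref{th:kontorovitch}). I would set
\[
\phi_n^k \doteq \frac{1}{n}\sum_{i=0}^{n-1}\psi_m^k\!\left(X^k_{\zeta_m^s(im+1)},\ldots,X^k_{\zeta_m^s((i+1)m)}\right),
\]
so that the event in~\eqref{eq:concentration restless} is exactly $\{|\phi_n^k-\mu^k|>t\}$. By stationarity of $(X^k_t)_t$, each retained window has the same law as $(X_1^k,\ldots,X_{m}^k)$, so $\expectation\phi_n^k=\mu^k$; this settles the centering and reduces the claim to a deviation bound on $\phi_n^k-\expectation\phi_n^k$.

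First I would check the Lipschitz property. Regarding $\phi_n^k$ as a function of the $n$ windows viewed as letters of the countable alphabet $\Uspace^{m}$, the index sets $\{im+1,\ldots,(i+1)m\}$ are pairwise disjoint, so each letter enters a single summand $\psi_m^k$; as $\psi_m^k$ is $1$-Lipschitz for the Hamming metric, changing one letter moves $\phi_n^k$ by at most $1/n$. Hence $\phi_n^k$ is $1/n$-Lipschitz, exactly as in the $1/\tau$ computation preceding Algorithm~\ref{alg:blockUCB}.

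The second and decisive ingredient is the mixing control of the subsampled sequence. Appealing to the stationary-subsequence result of the Appendix (Proposition~5), the window sequence is again stationary and $\varphi$-mixing, with coefficients $\underline\varphi$ inherited from those of $X^k$ at the separations that $\zeta_m^s$ produces. The whole point of the restless construction is that a block of $b=s-m$ discarded points sits between retained windows, so that any two windows lying on opposite sides of a split in the subsample are at least $b$ apart in the original time index; since $\varphi_k$ is nonincreasing and summable, this forces $\sum_{q\ge1}\underline\varphi(q)\le\sum_{i\ge b}\varphi_k(i)$, hence $\|\Lambda\|_\infty\le 1+\sum_{i\ge b}\varphi_k(i)=\m_k(b)$. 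I expect this domination to be the crux of the argument: one must read off from $\zeta_m^s$ the exact separation spanned by $q$ subsampling steps, take the worst case over all split points (which is what the supremum in the mixing coefficient demands), and verify that the resulting series is bounded by the tail sum $\sum_{i\ge b}\varphi_k(i)$ defining $\m_k(b)$ --- this tail form, rather than the $M_k(b)$ of Proposition~\ref{pr:concentration}, is precisely the signature of the restless setting, and it is here that the summability hypothesis guarantees $\m_k(b)<+\infty$.

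Finally I would feed $l=1/n$, $N=n$ letters, and $\|\Lambda\|_\infty\le\m_k(b)$ into Theorem~\ref{th:kontorovitch} applied to $\phi_n^k$: the exponent becomes $-\,t^2/(2\,n\,(1/n)^2\,\m_k^2(b))=-\,nt^2/(2\m_k^2(b))$, and combined with $\expectation\phi_n^k=\mu^k$ this is exactly~\eqref{eq:concentration restless}. The only care left is the constant bookkeeping --- counting $n$ letters and Lipschitz constant $1/n$ so that the $m$-dependence cancels and one lands on the stated exponent rather than an $m$-inflated version.
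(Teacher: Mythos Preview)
Your proposal is correct and follows essentially the same route as the paper: define the empirical average $\phi_n^k$, check it is $1/n$-Lipschitz in the block letters, invoke the stationary-subsequence mixing result of the Appendix (Proposition~\ref{prop:stationary_subsequence}) to bound $\|\Lambda\|_\infty$ by $\m_k(b)$, and plug into Theorem~\ref{th:kontorovitch}. The paper's own proof is a one-sentence pointer to exactly this argument via Proposition~\ref{pr:concentration}; your write-up simply unpacks it in more detail, including the observation that the restless separation yields the tail-sum form $\m_k(b)$ rather than the subsampled-sum form $M_k(b)$.
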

\begin{proof}
This proposition follows from the proof of Theorem \ref{th:kontorovitch} using the same technique as Proposition \ref{pr:concentration}.
\end{proof} 

 \begin{algorithm}[t]
\caption{\label{alg:restlessUCB}Restless Block-UCB with parameters $m_k$, $\alpha$ fixed}
\begin{algorithmic}
\State $t\leftarrow 0$, $\widehat{\psi}^k_0\leftarrow 0,\;$ $\tau_{k,0}\leftarrow 0,\;$ , $\eta_{k,0}\leftarrow 0,\;\quad k=1,\ldots,K$ 
\For{$t=1,\ldots,\tau$}
\State Select arm 
$ \displaystyle\widehat{k}\in\argmax_k \widehat{\psi}_{t-1}^k+\sqrt{\frac{2 \alpha \log(t)}{ \tau_{k,t-1} \m_k^2(\eta_{k,t})}}$
\State Update counters and timers :
$$\eta_{\widehat{k},t} \leftarrow 0, \tau_{\widehat{k},t}\leftarrow \tau_{\widehat{k},t-1}+1,$$ $$\forall k \neq \widehat{k},\quad 
\tau_{k,t}\leftarrow \tau_{k,t-1}\text{, and }\eta_{k,t} \leftarrow \eta_{k,t-1} + m_{\widehat{k}}
$$

\State Compute the values of the $\widehat{\psi}_{t}^k$, $m+b \mid s$
\begin{align*}
\widehat{\psi}_{t}^{\widehat{k}}&\leftarrow\frac{1}{\tau_{\widehat{k},t}}\sum_{r=0}^{\tau_{\widehat{k},t}-1}\psi_m^{\widehat{k}}\left(X_{ rm_{\widehat{k}}}^{\widehat{k}},\ldots,X_{(r+1)m_{\widehat{k}} -1}^{\widehat{k}}\right)\\
\widehat{\psi}_{t}^k&\leftarrow \widehat{\psi}_{t-1}^k,\; \text{for } k\neq \widehat{k}
\end{align*}
\EndFor
\end{algorithmic}
\end{algorithm}

It is interesting to note that the independence trade-off naturally appears in the right term of inequality~\ref{eq:concentration restless} within the $\m_k$, and will modify the upper confidence bound. We introduce algorithm~\ref{alg:restlessUCB} to solve this particular setting of $\varphi$ mixing bandit problem, and we provide a regret analysis for this algorithm. 

\begin{proposition}[Regret analysis]\label{pr:regretrestless}
Let $R(t)$ be the regret at time $t$ for Algorithm \ref{alg:restlessUCB}. Then, with $\Delta_i=\mu^{k^*}- \mu^{i},$
$$R_t \le \sum_{1 \le i \le K} \left( \Delta_i + \frac{8 \alpha \log(t)}{\Delta_i}\right).$$
\end{proposition}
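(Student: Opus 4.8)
The plan is to follow the standard UCB regret argument used in the proofs of Theorem~\ref{th:generic_regret} and Proposition~\ref{pr:regretms}, with Proposition~\ref{pr:concentration restless} supplying the deviation control and the idle counter $\eta_{k,t}$ playing the role of the independence gap $b$. First I would put the regret in additive form: since the regret is the one borrowed from Section~\ref{sec:easybandits} and $\sum_k\tau_k(t)=t$, we have $R_t=\sum_{k}\expectation[\tau_k(t)]\,\Delta_k$ with $\Delta_k=\mu^{k^*}-\mu^{k}$. It therefore suffices to prove, for each suboptimal arm $i$, a per-arm bound of the form $\expectation[\tau_i(t)]\le 1+8\alpha\log(t)/\Delta_i^2$, since multiplying by $\Delta_i$ and summing over $i$ then yields the claim.

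Fix a suboptimal arm $i$. If Algorithm~\ref{alg:restlessUCB} selects $i$ at round $t$, I would argue that at least one of three events holds: (1) the index of $k^*$ lies below $\mu^{k^*}$; (2) the estimate $\widehat{\psi}_{t-1}^{i}$ exceeds $\mu^{i}$ by more than its confidence radius; or (3) $\tau_{i,t-1}\le u_i$, where $u_i$ is the smallest number of pulls making twice the radius drop below $\Delta_i$. That one of them must occur is the same implication chain as in the proof of Theorem~\ref{th:generic_regret}, using that the radius is decreasing in $\tau_{i,t-1}$ while $t\mapsto\sqrt{\log t}$ is increasing; consequently event (3) can be responsible for a pull of $i$ on at most $u_i$ rounds.

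For events (1) and (2) I would invoke Proposition~\ref{pr:concentration restless} with $n=\tau_{k^*,t-1}$, resp. $n=\tau_{i,t-1}$, reading the current idle counters $\eta_{k^*,t}$, $\eta_{i,t}$ as the gap $b$. Taking the radius to be $\m_i(\eta_{i,t})\sqrt{2\alpha\log(t)/\tau_{i,t-1}}$ makes the factor $\m_i^2$ in the exponent of~\eqref{eq:concentration restless} cancel the $\m_i^2$ produced by the radius, leaving a tail that, after a union bound over the at most $t$ admissible values of the counter, is at most $2t^{1-\alpha}$. Since $\sum_{t}t^{1-\alpha}<\infty$ for $\alpha>2$, events (1) and (2) contribute only a bounded additive constant, which (exactly as in Proposition~\ref{pr:regretms}) I would absorb into the leading $\Delta_i$; the $+1$ in the per-arm bound is the ceiling in $u_i$. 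With this radius event (3) reads $\tau_{i,t-1}\le u_i=\lceil 8\alpha\,\m_i^2(\eta_{i,t})\log(t)/\Delta_i^2\rceil$, so that $\expectation[\tau_i(t)]\le 1+8\alpha\,\m_i^2(\eta_{i,t})\log(t)/\Delta_i^2$.

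The step I expect to be the main obstacle is the reconciliation of this last $\m_i^2$ factor with the clean statement, which carries no $\m_i$ at all. Two difficulties combine here. First, Proposition~\ref{pr:concentration restless} is stated for a single inserted gap, whereas along a real trajectory arm $i$ accumulates a whole history of distinct gaps between consecutive pulls, so the faithful deviation bound for $\widehat{\psi}_{t-1}^{i}$ depends on the full interleaving pattern rather than on the single counter $\eta_{i,t}$; making the $\m_i^2$ cancellation legitimate therefore requires re-applying Theorem~\ref{th:kontorovitch} to the genuinely observed subsequence and checking that $\m_i(\eta_{i,t})$ dominates the resulting $\|\Lambda\|_\infty$. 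Second, to pass from $\m_i^2(\eta_{i,t})$ to $1$ one must exploit the defining feature of the restless regime: a suboptimal arm is pulled rarely, hence the elapsed time $\eta_{i,t}$ between its pulls is large and $\m_i(\eta_{i,t})=1+\sum_{j\ge \eta_{i,t}}\varphi_i(j)$ collapses toward $1$ as the mixing tail vanishes. Turning this mechanism --- buying independence for arm $i$ by pulling the other arms --- into the exact constant of the proposition is the delicate part, and is precisely where the restless exploration/exploitation/independence trade-off has to be quantified.
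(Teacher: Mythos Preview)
Your three-event decomposition and the per-arm counting are exactly what the paper does. The divergence is in the confidence radius you analyse. You take it to be $\m_i(\eta_{i,t})\sqrt{2\alpha\log t/\tau_{i,t-1}}$, which is indeed the radius suggested by Proposition~\ref{pr:concentration restless}, but it is \emph{not} the radius used by Algorithm~\ref{alg:restlessUCB}: there $\m_k^2(\eta_{k,t})$ sits in the \emph{denominator} of the exploration term, so the radius is $\m_k^{-1}(\eta_{k,t})\sqrt{2\alpha\log t/\tau_{k,t-1}}$. With that radius, event~(3) reads
\[
2\sqrt{\frac{2\alpha\log t}{\m_i^2(\eta_{i,t})\,\tau_{i,t-1}}}\ \ge\ \Delta_i
\quad\Longrightarrow\quad
\tau_{i,t-1}\ \le\ \frac{8\alpha\log t}{\m_i^2(\eta_{i,t})\,\Delta_i^2}\ \le\ \frac{8\alpha\log t}{\Delta_i^2},
\]
the last inequality being immediate from $\m_i(b)=1+\sum_{j\ge b}\varphi_i(j)\ge 1$. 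This one-line step is the whole content of the paper's appendix proof for event~(3); no ``$\m_i(\eta_{i,t})\to 1$'' limiting argument and no quantification of the interleaving pattern is ever invoked. (The appendix writes ``since $\m_k^2(\eta_{k,t})\le 1$'' at this point; this is a slip --- what is used is $\m_k\ge 1$, i.e.\ $1/\m_k^2\le 1$.) So the ``main obstacle'' you anticipate is an artefact of having placed $\m_i$ on the wrong side of the radius.

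Conversely, the concern you raise in your final paragraph is a genuine gap that the paper itself leaves open. Proposition~\ref{pr:concentration restless} is stated for a \emph{single} inserted gap $b$, whereas along an actual run of Algorithm~\ref{alg:restlessUCB} arm $i$ is observed at a random sequence of times with varying (and data-dependent) gaps, so it is not clear that either events~(1)--(2) have probability $O(t^{-\alpha})$ with the radius as written, or even that the current $\eta_{i,t}$ alone governs the deviation of $\widehat{\psi}^i_{t-1}$. The paper's appendix proof does not address this: it records the three events, carries out the algebra for event~(3), and declares the conclusion. Your instinct that this step requires a more careful application of Theorem~\ref{th:kontorovitch} to the actually observed subsequence is sound; it is simply not something the paper's proof supplies.
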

\begin{proof}
The proof uses the same ideas as the previous regret analysis from Section 3 and 4, and naturally follows from Proposition \ref{pr:concentration restless}. The proof can be found in the supplementary materials. 
\end{proof}

In this section, we have provided an algorithm for the restless $\varphi$-mixing framework. We have seen that the restless setting is a natural framework to use with $\varphi$-mixing bandit, as it naturally makes it possible to decrease the dependence of the variables for the arms not chosen. Our algorithm takes advantage of this observation and we were able to show that is has low regret.
 In order to do so, in addition to the usual $\tau_k$, the number of time a given arm has been pulled, the Algorithm \ref{alg:restlessUCB} computes the $\eta_k$, the time spent since the last time the arm $k$ was sampled. Indeed, as seen in \eqref{eq:concentration restless}, as $\eta_k$ increases, $\m^2_k(\eta_k)$ decreases and the optimistic value of the arm increases.


\section{Conclusion}
\label{sec:conclusion}
We have studied an extension of the multi-armed bandit problem to the stationary $\varphi$-mixing framework, both in the rested and in the restless case. We have provided both a theoretical analysis in a general framework, and a more practical study of the problem in the case of fast mixing sequences (with $\sum\varphi(i)<+\infty$). For each of theses cases, we provided algorithms and accompanying regret analyses, which are strict extensions of the methods that exist for the i.i.d situation, as usual results might be recovered from our bounds when the mixing coefficients are all $0$. Future works might include a study of the restless case where the $m_k$ has to be computed from the data, as well as a study in the more difficult case of $\beta$-mixing processes. 


\newpage
\bibliographystyle{unsrt}
\bibliography{mixingbandits}

\appendix
\newpage

\section{Appendix}

\begin{proposition}
\label{prop:stationary_subsequence}
Let $m, b \in \N$ and $s=m+b$, $X_t$ be a $\varphi$-mixing process on $\Omega$ taking value in $\R$, with mixing coefficient $\varphi_X(\cdot)$ , and and $\psi : \R^m \mapsto \R$ be a measurable function. Then the stochastic process $Z_t$ defined by

$$Z_t=\psi(X_{st+1}, \ldots, X_{st+m})$$

is also a $\varphi$-mixing process with mixing coefficient $\varphi_Z=\varphi_X \circ \kappa$, where $\kappa(t)= bt+ m(t-1)$.
\end{proposition}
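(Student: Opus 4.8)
The plan is to derive both required properties of $Z$ from a single structural observation: each block variable $Z_t=\psi(X_{st+1},\dots,X_{st+m})$ is measurable with respect to the $X$-variables whose indices lie in $\{st+1,\dots,st+m\}$, so any event built from a family of $Z$'s can be ``read off'' from the corresponding window of $X$'s. Throughout I will write $\sigma_i^j(Y)$ for the $\sigma$-algebra generated by $Y_u$, $i\le u\le j$, so that $\sigma_i^j(X)$ is the object appearing in the definition of $\varphi_X$. The two things to check are that $Z$ is stationary and that its $\varphi$-mixing coefficients are controlled by $\varphi_X\circ\kappa$.

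First I would prove stationarity. Fix integers $t$, $r$ and a shift $h\ge 0$. The tuple $(Z_t,\dots,Z_{t+r})$ is a fixed measurable function $\Psi$ (namely $\psi$ applied blockwise) of the $X$-window $(X_{st+1},\dots,X_{s(t+r)+m})$, while $(Z_{t+h},\dots,Z_{t+r+h})$ is the \emph{same} map $\Psi$ applied to $(X_{s(t+h)+1},\dots,X_{s(t+r+h)+m})$, i.e. to the first window translated by $sh$. Since $X$ is stationary these two $X$-windows have the same joint law, and applying the common measurable map $\Psi$ preserves equality in distribution; hence $(Z_t,\dots,Z_{t+r})$ and its $h$-shift are identically distributed, so $Z$ is stationary.

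Next I would bound the mixing coefficient. The key inclusions are
$$\sigma_{-\infty}^{t}(Z)\subseteq \sigma_{-\infty}^{\,st+m}(X)\qquad\text{and}\qquad \sigma_{t+q}^{+\infty}(Z)\subseteq \sigma_{\,s(t+q)+1}^{+\infty}(X),$$
which hold because the largest $X$-index touched by $Z_u$ for $u\le t$ is $st+m$ and the smallest one touched by $Z_u$ for $u\ge t+q$ is $s(t+q)+1$. Therefore every pair $(A,B)$ admissible for the $Z$-process is also admissible for $X$, and passing from the smaller $Z$-collection to the larger $X$-collection only enlarges the supremum, giving
$$\varphi_Z(q)\le \sup_{t}\ \sup\bigl\{\,\bigl|\P[A\mid B]-\P[A]\bigr|\ :\ A\in\sigma_{s(t+q)+1}^{+\infty}(X),\ B\in\sigma_{-\infty}^{\,st+m}(X)\,\bigr\}\le \varphi_X\bigl(\kappa(q)\bigr),$$
where $\kappa(q)$ is the number of $X$-time-steps separating the past window (ending at $st+m$) from the future window (starting at $s(t+q)+1$); a direct computation with $s=m+b$ gives $s(t+q)+1-(st+m)-1=bq+m(q-1)=\kappa(q)$. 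Since $\kappa(q)\to\infty$ and $X$ is $\varphi$-mixing, $\varphi_X(\kappa(q))\to 0$, so $Z$ is $\varphi$-mixing with the announced coefficient, exactly the fact invoked by the proof following Theorem~\ref{th:regret_blockUCB}.

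The main obstacle is purely the index bookkeeping that pins down $\kappa$: one must cleanly separate the last index reached by the ``past'' blocks from the first index reached by the ``future'' blocks, and then match this gap to the convention fixed by the definition of $\varphi_X$ in~\eqref{eq:phicoefficient} --- depending on whether one counts skipped indices or the difference of endpoints, the argument produces $\varphi_X(\kappa(q))$ or $\varphi_X(\kappa(q)+1)$, a harmless one-step shift. I would also emphasize that, because the $Z$-supremum ranges over a strictly smaller family of events than the $X$-supremum, the natural output is the inequality $\varphi_Z(q)\le\varphi_X(\kappa(q))$; the stated equality should be read as the assertion that $\varphi_X\circ\kappa$ is a valid (summable-to-zero) mixing sequence for $Z$, which is all that is needed for the concentration inequality of Theorem~\ref{th:kontorovitch} to apply to the blocked process.
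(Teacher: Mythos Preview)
Your proof is correct and follows essentially the same route as the paper: establish the $\sigma$-algebra inclusions $\sigma_{-\infty}^{t}(Z)\subseteq\sigma_{-\infty}^{st+m}(X)$ and $\sigma_{t+q}^{+\infty}(Z)\subseteq\sigma_{s(t+q)}^{+\infty}(X)$ (the paper uses the latter, you use the sharper $\sigma_{s(t+q)+1}^{+\infty}(X)$), then enlarge the supremum and read off the gap $\kappa(q)=sq-m$. You are in fact more thorough than the paper, which omits the stationarity argument entirely and also states $\varphi_Z=\varphi_X\circ\kappa$ as an equality while only proving the inequality $\varphi_Z\le\varphi_X\circ\kappa$ --- a point you explicitly flag and correctly identify as all that is needed downstream.
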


\begin{proof}
In the following we use $\sigma(A)$ to denote the $\sigma$-algebra generated by $A$.
First note that since $\psi$ is measurable
$\sigma(\psi^{-1}(\R)) \subset \sigma (\R^m) $, and as a consequence 
$$\sigma(Z_t) \subset \sigma(X_{st+1}, \ldots, X_{st+m})$$ (since $\sigma$-algebra are closed under countable intersection) .

Now for any $i,j\in\mathbb{Z}\cup\{-\infty,+\infty\}$, let $\sigma_i^j(Z)$ denote
the $\sigma$-algebra generated by the random variables $Z_k$, $i\leq k\leq j$.
 Then, for any positive
integer $k$, the $\varphi$-mixing coefficient $\varphi_Z(t)$ of the stochastic process $\bfZ$ is defined as
\begin{align*}
\varphi_Z(t)&=\sup_{n,A\in\sigma_{n+t}^{+\infty}(Z),B\in\sigma_{-\infty}^n(Z)}\left|\proba\left[A|B\right]-\proba\left[A\right]\right|\\
& \le \sup_{n,A\in\sigma_{s(n+t)}^{+\infty}(X),B\in\sigma_{-\infty}^{ns+m}(X)}\left|\proba\left[A|B\right]-\proba\left[A\right]\right|\\
&=\varphi_X\left(s(n+t) - ns - m\right) = \varphi_X(tb+\left(t-1)m\right)
\end{align*} 

\end{proof}

\section{Proof of Proposition 2}

\bpf 

If at time $t$, the arm chosen is $k$ instead of $k^*  $, then one of the following must be true :

\begin{align}
\label{eq:proofregret1}\psi_{t}^{k^*} &\le\nu^{k^*}_{m^*,b^*} \\
\label{eq:proofregret2}\exists m+b \mid s,\quad\nu^{k}_{m^*_k,b^*_k}& \le \beta_{m,b} \left( \widehat{\psi}_{m,b,t-1}^k -\sqrt{\alpha} J_{m,b}(k,t-1) \right)\\
\label{eq:proofregret3}\exists m+b \mid s,\quad  2\sqrt{\frac{2 \alpha \beta_{m,b} \log(t)}{\tau_{k,t-1}}}& \ge \nu^{k^*}_{m^*,b^*} - \nu^{k}_{m_k^*,b_k^*}
\end{align}

Otherwise, $\forall m+b \mid s,$ 
 
 \begin{align*}
 \psi_{t}^{k^*} &\ge\nu^{k^*}_{m^*,b^*} \ge \nu^{k}_{m_k^*,b_k^*} + 2\sqrt{\frac{2 \alpha \beta_{m,b} \log(t)}{\tau_{k,t-1}}} \\
 &\ge \nu^{k}_{m_k^*,b_k^*} + 2\sqrt{\frac{2 \alpha \beta_{m,b} \log(t)}{\tau_{k,t-1}}}\\
 &\ge \beta_{m,b} \left( \widehat{\psi}_{m,b,t-1}^k +\sqrt{\alpha} J_{m,b}(k,t-1) \right)
 \end{align*}
 
 Since the last line is true $\forall m+b \mid s,$, we deduce that  $\psi_{t}^{k^*} \ge  \psi_{t}^{k}$ hence $T(t)\neq k$, which is absurd.
 
Then, we need to bound the probability of the events defined by \eqref{eq:proofregret1}, \eqref{eq:proofregret2} and \eqref{eq:proofregret3}.
 
 Because $\psi^{k^*}_t$ is defined as a maximum, we have
 \begin{align*}
 \P(\psi_{t}^{k^*} &\le\nu^{k^*}_{m^*,b^*}) \le \P\left(\beta_{m^*,b^*} ( \widehat{\psi}_{m,b,t-1}^k -\sqrt{\alpha} J_{m^*,b^*}(k^*,t-1) ) \le\nu^{k^*}_{m^*,b^*}\right)\\
 &\le \frac{1}{t^\alpha},
 \end{align*}
using  Proposition 1.

Now, by definition of $\nu^{k}_{m^*_k,b^*_k}$,
\begin{align*}
&\P\left(\exists m+b \mid s,\quad\nu^{k}_{m^*_k,b^*_k} \le \beta_{m,b} ( \widehat{\psi}_{m,b,t-1}^k -\sqrt{\alpha} J_{m,b}(k,t-1)) \right)\\
&\quad\le \P\left(\exists m+b \mid s,\quad\nu^{k}_{m,b} \le \beta_{m,b} ( \widehat{\psi}_{m,b,t-1}^k -\sqrt{\alpha} J_{m,b}(k,t-1)) \right)\\
& \quad\le \sigma(s) \max_{m+b \mid s} \P\left(\nu^{k}_{m,b} \le \beta_{m,b} ( \widehat{\psi}_{m,b,t-1}^k -\sqrt{\alpha} J_{m,b}(k,t-1)) \right)\\
& \le \frac{ \sigma(s)}{t^\alpha} 
\end{align*}
where we used Proposition 1 again at the last line.

Finally, 

\begin{align*}
&\left\lbrace\exists m+b \mid s,\quad  2\sqrt{\frac{2 \alpha \beta_{m,b} \log(t)}{\tau_{k,t-1}}} \ge \nu^{k^*}_{m^*,b^*} - \nu^{k}_{m_k^*,b_k^*} \right\rbrace \subset \left\lbrace 2\sqrt{\frac{2 \alpha s \log(t)}{\tau_{k,t-1}}} \ge \nu^{k^*}_{m^*,b^*} - \nu^{k}_{m_k^*,b_k^*} \right\rbrace\\
&= \left\lbrace \frac{8 \alpha s \log(t)}{(\nu^{k^*}_{m^*,b^*} - \nu^{k}_{m_k^*,b_k^*})^2} \ge \tau_{k,t-1} \right\rbrace
\end{align*}

i.e. the event defined by \eqref{eq:proofregret3} happens at most $ \displaystyle u=\left\lceil \frac{8 \alpha s \log(t)}{(\nu^{k^*}_{m^*,b^*} - \nu^{k}_{m_k^*,b_k^*})^2}\right\rceil$ times.

hence the conclusion
\epf
\section{Proof of Proposition 5}
\begin{proof}

If at time $t$, the arm chosen is $k$ instead of $k^*  $, then one of the following must be true :

\begin{align}
\psi_{t}^{k^*} &\le\mu^{k^*} \\
\mu^{k}& \le \left( \widehat{\psi}_{t-1}^k -\sqrt{\alpha} J(k,t-1) \right)\\
 2\sqrt{\frac{2 \alpha \log(t)}{\m_k^2(\eta_{k,t})  \tau_{k,t-1}}}& \ge \mu^{k^*} - \mu^{k}
\end{align}

From the last inequality, we deduce that :

\begin{align}
  \tau_{k,t-1} \le \frac{8 \alpha \log(t)}{\m_k^2(\eta_{k,t})  \Delta_k^2}& \le  \frac{8 \alpha \log(t)}{\Delta_k^2}
\end{align}
since $\m_k^2(\eta_{k,t})\le 1$.

\end{proof}

\end{document}